\DeclareMathOperator*{\argmax}{argmax}
\newtheorem{theorem}{Theorem}
\newtheorem{proposition}[theorem]{Proposition}
\newtheorem{lemma}[theorem]{Lemma}
\icmltitlerunning{FPCA for Extrapolating Multi-stream Longitudinal Data}
\begin{document}

\twocolumn[
\icmltitle{Functional Principal Component Analysis for  Extrapolating Multi-stream Longitudinal Data}

\icmlsetsymbol{equal}{*}

\begin{icmlauthorlist}
\icmlauthor{Seokhyun Chung}{to}
\icmlauthor{Raed Kontar}{to}
\end{icmlauthorlist}

\icmlaffiliation{to}{Industrial \& Operations Engineering, University of Michigan, Ann Arbor, MI, USA}

\icmlcorrespondingauthor{Raed Kontar}{alkontar@umich.edu}

\icmlkeywords{Functional principal component analysis, Gaussian process}

\vskip 0.3in
]

\printAffiliationsAndNotice{}

\begin{abstract}

The advance of modern sensor technologies enables collection of multi-stream longitudinal data where multiple signals from different units are collected in real-time. In this article, we present a non-parametric approach to predict the evolution of multi-stream longitudinal data for an in-service unit through borrowing strength from other historical units. Our approach first decomposes each stream into a linear combination of eigenfunctions and their corresponding functional principal component (FPC) scores. A Gaussian process prior for the FPC scores is then established based on a functional semi-metric that measures similarities between streams of historical units and the in-service unit. Finally, an empirical Bayesian updating strategy is derived to update the established prior using real-time stream data obtained from the in-service unit. Experiments on synthetic and real world data show that the proposed framework outperforms state-of-the-art approaches and can effectively account for heterogeneity as well as achieve high predictive accuracy.

\end{abstract}

\section{Introduction}
\label{S:1}

Among various environments where longitudinal data is gathered, the environment covered in this study is a \textit{multi-stream} and \textit{real-time} environment. Recent progress in sensor and data storage technologies has facilitated data collection from multiple sensors in real-time as well as the accumulation of historical signals from multiple similar units during their operational lifetime. This data structure where multiple signals across different units are collected is referred to as multi-stream longitudinal data. Examples include: vital health signals from patients collected through wearable devices \cite{caldara2014wearable, magno2016infinitime}, battery degradation signals from cars on the road \cite{meeker2014reliability, salamati2018experimental} and energy usage patterns from different smart home appliances \cite{hsu2017design}. 

In this article, we  propose an efficient approach to extrapolate multi-stream data for an in-service unit through borrowing strength from other historical units. An illustrative example is provided in Figure \ref{fig:ms}. In this figure, there are $N$ historical units and an in-service unit whose index is denoted by $r$. Each unit has $M$ identical sensors from which each respective signal forms a stream. Multi-stream data from the in-service unit is partially observed up to the current time instance $t^\ast$. Our goal is to extrapolate stream data from the in-service unit $r$ over a future period $t \geq t^* \in \mathcal{T}$ where $\mathcal{T}$ is the time domain of interest.

\begin{figure}[H]
	\begin{center}
		\centerline{\includegraphics[width=\columnwidth]{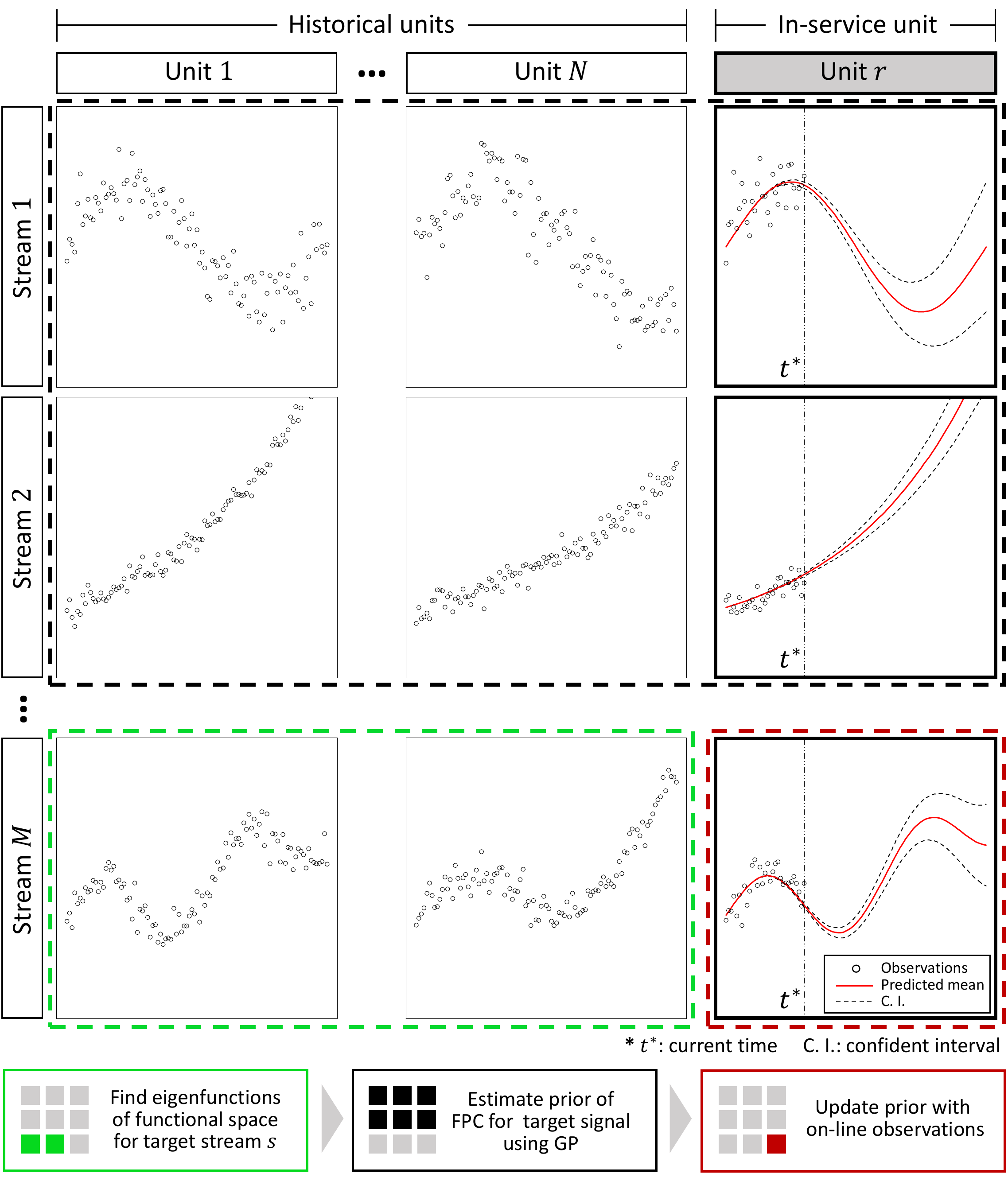}}
		\caption{Extrapolation of multi-stream longitudinal data for an in-service unit.}
		\label{fig:ms}
	\end{center}
	\vskip -0.4in
\end{figure}

In mathematical notation, let $I = \lbrace1,...,N,r\rbrace$ and $I^{\mathcal{H}} = \lbrace1,...,N\rbrace$ be the respective index sets for all available units including the in-service unit $r$ and the units in our historical dataset. For each unit $i \in I$, we have $M$ streams of data where $l \in L = \lbrace 1,...,M\rbrace$. For the $i$th unit, the history of observed data for a specific stream $l$ is denoted as $X^{(l)}_i(t_{iu})$, where $\lbrace t_{iu}: u=1,...,p^{(l)}_i\rbrace \subset \mathcal{T}$ represents the observation time points and $p^{(l)}_i$ represents the number of observations for signal $l$ of unit $i$. The underlying principle of our model is borrowing strength from a sample of curves $\{X_i^{(l)} (t): i \in I^{\mathcal{H}}, l \in L \}$ to predict individual trajectories $X_r^{(l)} (t)$ over a future time period  $t \geq t^* \in \mathcal{T}$. Without loss of generality, throughout the article we focus on predicting stream $s\in L$, which we refer to as the target stream. Note that the target stream in Figure \ref{fig:ms} is stream $M$. To achieve this goal we exploit functional principal component analysis (FPCA), which is a non-parametric tool for functional data analysis \cite{ramsay2nd}. Indeed, FPCA has recently drawn increased attention due to its flexibility, uncertainty quantification capabilities and the ability to handle sparse and irregular data \cite{peng2009geometric,di2009multilevel,wang2016functional,xiao2018fast}. However, advances in FPCA fall short of handling \textit{multi-stream} data and \textit{real-time} predictions.

Our overall framework is summarized at the bottom of Figure \ref{fig:ms}. Specifically, historical signals $X^{(s)}_i(t)$, $i \in I^{\mathcal{H}}$, from the target stream $s$  are decomposed into a linear combination of orthonormal eigenfunctions that form their functional space. The coefficients of the linear combination are called functional principal component (FPC) \textit{scores}. With the assumption that the target signal of the in-service unit $X^{(s)}_r(t)$ lies in the same functional space, a proper estimation of FPC scores associated with $X^{(s)}_r(t)$ is required. Here, we propose to establish a prior on these FPC scores using information from streams $l \in L^{-s} = L \backslash \lbrace s\rbrace$. Specifically, a Gaussian process (GP) prior for FPC scores of $X^{(s)}_r(t)$ is built using a functional semi-metric that measures similarities of streams $L^{-s}$ between historical units and the in-service unit $r$. The underlying principle is that unit $X^{(s)}_r(t)$ will exhibit more commonalities with historical units that exhibit similar trends in streams $L^{-s}$. For example, if stream $s$ denotes degradation trajectories and $L^{-s}$ denotes  external factors such as temperature. Then $X^{(s)}_r(t)$ will share more commonalities with a subset of historical signals $X^{(s)}_i(t)$, $i \in I^{\mathcal{H}}$, degrading under similar external factors (similar temperatures). This approach allows us to address heterogeneity in the data. Lastly, an empirical Bayesian updating strategy is derived to update the established prior using real-time stream data obtained from the in-service unit.

\section{Literature Review}
\label{S:2}

There has been extensive literature on the extrapolation of longitudinal signals under a single stream setting. However, literature has mainly focused on parametric models due to their computational efficiency and ease of implementation \cite{nagi2005residual, gebraeel2008prognostic, si2012remaining, si2013wiener, kontar2017remaining}.  Such models have been applied in healthcare, manufacturing and mobility applications specifically to understand the remaining useful life of operational units. Unfortunately, in real-world applications, parametric modeling is vulnerable to model misspecifications, and if the specified form is far from the truth, predictive results will be misleading. For instance, parametric representations are specifically challenging when data is sparse or when the underlying physical and chemical theories guiding the process are unknown. 

To address this issue, recent attempts at non-parametric approaches have been based on FPCA \cite{zhou2011degradation, zhou2012degradation, fang2015adaptive} or multivariate Gaussian processes \cite{alvarez2011computationally,saul2016chained, kontar2018nonparametric, kontar2019minimizing}. These studies show that such non-parametric approaches outperform parametric models in case where functional forms are complex and exhibit heterogeneity. Nevertheless, the foregoing works have dealt with only single stream cases.

On the other hand, the few literature that addressed multi-stream settings have focused on data fusion approaches. Data fusion in this case refers to aggregating all streams into a single stream using fusion mechanisms. In health related applications, this fused stream is coined as a health-index which is often derived through a weighted combination of the $M$ data streams \cite{Liu2013Data, song2018statistical}. Such methods require regularly sampled observations and enforce strong parametric assumptions. An alternative data fusion approach includes multivariate FPCA \cite{fang2017scalable,fang2017multistream}. However, since data fusion methods are operated by aggregating multi-streams into a single or a smaller group of streams, they are not capable of predicting individual stream trajectories and thus have limited applications.

Compared to current literature, our contribution can be summarized as follows. We propose an FPCA-based model that  provides individualized predictions in a multi-stream environment. Our model is able to automatically account for heterogeneity in the data and screen the sharing of information between the in-service unit and units in our historical dataset. We then derive a computationally efficient Bayesian updating strategy to update predictions when data is collected in real-time. We demonstrate the advantageous features of our approach compared to state-of-the-art methods using both synthetic and real-world data.    

The rest of this paper is structured as follows. In section \ref{S:3}, we briefly revisit the FPCA. In section \ref{S:4}, we discuss our proposed model. Numerical experiments using synthetic data and real-world data are provided in section \ref{S:5}. Finally, section \ref{S:6} discusses the computational complexity of our model. Technical proofs, a detailed code and additional numerical results are available in the \href{https://alkontar.engin.umich.edu/publications/}{supplementary materials}.

\section{Brief Review of FPCA}
\label{S:3}
From an FPCA perspective, longitudinal signals observed in a given time domain can be decomposed into a linear combination of orthonormal basis functions with corresponding FPC scores as coefficients. Therefore, FPCA can be regarded as a dimensionality reduction method in which a signal corresponds to a vector in a functional space defined by the basis functions. The basis functions are referred to as eigenfunctions. Let us assume that the longitudinal signals, over a given time domain $\mathcal{T}$, are generated from a square-integrable stochastic process $X(t)$ with its mean $\mathbb{E}[X(t)] = \mu(t)$ and covariance defined by a positive semi-definite kernel $G(t_1,t_2) = \text{Cov}(X(t_1), X(t_2))$ for $t_1, t_2\in \mathcal{T} $. Using Mercer's theorem on $G(t_1, t_2)$, we have 
\begin{equation*}
    G(t_1, t_2) = \sum_{k=1}^{\infty}\lambda_k \phi_k(t_1) \phi_k(t_2),
\end{equation*}
where $\phi_k(t)$ presents the $k$th eigenfunction of the linear Hilbert-Schmidt operator $G: L^2(\mathcal{T}) \rightarrow L^2(\mathcal{T}), G(f) = \int_\mathcal{T}G(t_1,t_2)f(t_1)dt_1$ ordered by the corresponding eigenvalues $\lambda_k$, $\lambda_1\geq \lambda_2 \geq \dots \geq 0$. The eigenfunctions $\phi_k(t)$ form a set of orthonormal basis in the Hilbert space $L^2(\mathcal{T})$. Following the Karhunen-Lo\'eve decomposition, the centered stochastic process $X(t) - \mu(t)$ can then be expressed as
\begin{equation*}
 X(t) - \mu(t) = \sum_{k=1}^{\infty}\xi_{k}\phi_{k}(t) + \epsilon(t),
 \label{S1:eq1}
\end{equation*}
where $\xi_{k}  = \int_\mathcal{T} \big( X(t) - \mu(t) \big) \phi_k(t) $ presents FPC scores associated with $\phi_k(t)$. The scores are uncorrelated normal random variables with zero-mean and variance $\lambda_k$; that is, $\mathbb{E}[\xi_k] = 0, \forall k \in \mathbb{N}$ and $\mathbb{E}[\xi_{k_1}\xi_{k_2}] = \delta_{k_1k_2}\lambda_{k_1}, \forall k_1,k_2\in \mathbb{N}$ where $\delta_{k_1k_2}$ denotes the Kronecker delta. Also, $\epsilon(t)$ is additive Gaussian noise.
 
 This idea of projecting signals onto a functional space spanned by eigenfunctions was first introduced by \citet{rao1958some} for growth curves in particular. Basic principles \cite{Castro1986principal,besse1986principal} and theoretical characteristics \cite{silverman1996smoothed,boente2000kernel,Kneip2001Inference} were then developed. These ideas were expanded to longitudinal data settings in the seminal work of \citet{yao2005functional}. After that, the FPCA was applied and extended to a wide variety of applications, where multiple works tackled fast and efficient estimation of the underlying covariance surface \cite{huang2008functional, di2009multilevel, peng2009geometric, goldsmith2013corrected, xiao2016fast}.

\section{Extrapolation of Multivariate Longitudinal Data} \label{S:4}

\subsection{FPCA for Signal Approximation}\label{S:4.1}

Now we discuss our proposed non-parametric approach for extrapolation of multi-stream longitudinal data. Hereon, unless there is ambiguity, we suppress subscripting the target stream with $(s)$. Using historical signals $X_i(t)$, $i\in I^{\mathcal{H}}$, we decompose the target stream $s$ as 
\begin{equation}
X_{i}(t) = \mu(t) + f_{i}(t) + \epsilon(t), \quad i \in I^{\mathcal{H}},
\label{eq:3.1}
\end{equation}

where $f_{i}(t)$ represents random effects characterizing stochastic deviations across different historical signals in stream $s$ and $\epsilon(t)$ denotes additive noise.  We assume $f_{i}(t)$ and  $\epsilon(t)$ are independent. Through an FPCA decomposition, we have that $f_i(t)= \sum_{k=1}^{\infty}\xi_{ik}\phi_{k}(t)$. This decomposition is an infinite sum, however, only a small number of eigenvalues are commonly significantly non-zero. For these values the corresponding scores $\xi_{ik}$ will also be approximately zero. Therefore, we approximate this decomposition as $f_i(t)=\sum_{k=1}^{K}\xi_{ik}\phi_{k}(t)$, where $K$ is the number of significantly nonzero eigenvalues.
\begin{equation}
X_{i}(t) = \mu(t) + \sum_{k=1}^{K}\xi_{ik}\phi_{k}(t) + \epsilon(t), \quad i \in I^{\mathcal{H}}.
\label{eq:3.3}
\end{equation}
Here we follow the standard estimation procedures in \citet{di2009multilevel} and \citet{goldsmith2018refund} to estimate the model parameters where  $\mu(t)$ is obtained by local linear smoothers \cite{fan1st}, while $K$ is selected to minimize the modified Akaike criterion. Now given that the in-service unit $r$ lies in the same functional space spanned by $\phi_k(t)$, our task is to find the individual distribution of  $\xi_{rk}$ using the partially observed multi-stream data from unit $r$. Specifically, we aim to find $X_{r}(t) = \mu(t) + \sum_{k=1}^{K}\xi_{rk}\phi_{k}(t) + \epsilon(t)$.

\subsection{Estimation for Prior Distribution of FPC scores via GP}\label{S:4.2}

Next, we estimate the prior distribution of $\xi_{rk}$ based on the key premise that $X_{r}(t)$ will behave more similarly to $X_{i}(t)$ for some units $i\in I^{\mathcal{H}}$ whose signals $X^{(l)}_{i}(t)$ for $l\in L^{-s}$ are similar to the corresponding signals of the in-service unit $X^{(l)}_{r}(t)$. To this end, for $k \in \lbrace 1,...,K\rbrace$, we model a functional relationship between $[\xi_{1k},...,\xi_{Nk}, \xi_{rk}]'$ and $X_i^{(l)}(t),..., X_N^{(l)}(t),X_r^{(l)}(t)$ for $l\in L^{-s}$ as 
\begin{equation}
\begin{split}
 [&\boldsymbol{\xi'}_{k},  \xi_{rk}]' \\
 &= \mathcal{G}_k\big(\bm{X}^{(-s)}_1(t), ..., \bm{X}^{(-s)}_N(t), \bm{X}^{(-s)}_r(t)\big) + \bm{\epsilon}_{\xi_k},
 \label{eq:3.5}
\end{split}
\end{equation}
where $\bm{X}^{(-s)}_i(t)=[X_i^{(l)}(t)]_{l\in L^{-s}} $ for $i\in I$, $\boldsymbol{\xi}_{k} = [\xi_{ik}]_{i\in I^{\mathcal{H}}}$, and $\bm{\epsilon}_{\xi_k} \sim \mathcal{N}(\mathbf{0}, \sigma^2_k)$.

The idea here, is to model $\mathcal{G}_k$ as a GP with a covariance function defined by a similarity measure between the observed signals, i.e., a functional similarity measure. Specifically, for any $k \in \lbrace 1,...,K\rbrace$, the vector of FPC scores $[\boldsymbol{\xi}'_{k}, \xi_{rk}]'$ will follow a multivariate Gaussian distribution    
\begin{equation}
 \begin{bmatrix} \boldsymbol{\xi}_{k} \\ \xi_{rk} \end{bmatrix}  \sim \mathcal{N}\bigg(\mathbf{0}_{N+1}, \begin{bmatrix}\mathbf{C}_k + \sigma_k^2\mathbf{I}_{N} & \mathbf{c}_k \\ \mathbf{c}'_k & c^{(r)}_k \end{bmatrix} \bigg),
 \label{eq:3.6}
\end{equation}
where $\mathbf{C}_k \in \mathbb{R}^{N\times N}$ is constructed such that its $(i,j)$th element is $h(i,j; \bm{\theta}_k)$ for $i,j \in I^\mathcal{H}$, $\mathbf{c}_k = [h(i,r,\bm{\theta}_k)]_{i\in I^{\mathcal{H}}}$, $c^{(r)}_k=h(r,r,\bm{\theta}_k)$, and $h$ denotes a covariance function defined as
\begin{equation*}
 h(i,j;\bm{\theta}_k) = \alpha_k \exp \bigg(-\frac{1}{2}\sum_{l\in L^{-s}} \frac{\big\Vert X^{(l)}_i(t) - X^{(l)}_{j}(t) \big\Vert^2_l}{\big(\beta^{(l)}_{k}\big)^2}  \bigg),
 \label{eq:3.7}
\end{equation*}
in which $\Vert \cdot \Vert_l$ is a semi-metric providing as similarity measure across functions, and $\alpha_k$ and $\beta_{k}^{(l)}$ are hyperparameters for streams $l\in L^{-s}$. For notational simplicity, we introduce $\bm{\theta}_k = \big[\alpha_k, \beta_{k}^{(1)},...,\beta_{k}^{(s-1)}, \beta_{k}^{(s+1)},...,\beta_{k}^{(M)}\big]'$. 

To show the validity of the GP \eqref{eq:3.6}, we provide the following lemma.
\begin{lemma}
The matrix $\begin{bmatrix}\mathbf{C}_k + \sigma_k^2\mathbf{I}_N & \mathbf{c}_k \\ \mathbf{c}'_k & c^{(r)}_k \end{bmatrix}$ corresponding to the covariance function $h(i,j;\bm{\theta}_k)$ a valid covariance matrix.
\end{lemma}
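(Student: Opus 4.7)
The plan is to show that the given block matrix
\[
\mathbf{M} := \begin{bmatrix}\mathbf{C}_k + \sigma_k^2\mathbf{I}_N & \mathbf{c}_k \\ \mathbf{c}'_k & c^{(r)}_k \end{bmatrix}
\]
is symmetric and positive semi-definite, which is exactly what is required to be a valid covariance matrix. Symmetry is immediate from the symmetry of $h(i,j;\bm{\theta}_k)$ in its first two arguments, so the whole content is in the PSD part. As a first reduction, I would write $\mathbf{M} = \mathbf{H} + \mathbf{D}$, where $\mathbf{H} := [h(i,j;\bm{\theta}_k)]_{i,j\in I}$ is the full $(N+1)\times(N+1)$ Gram matrix of $h$ on all units (historical plus in-service $r$), and $\mathbf{D} := \mathrm{diag}(\sigma_k^2,\ldots,\sigma_k^2,0)$. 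Because $\mathbf{D}$ has non-negative diagonal, it is trivially PSD, so it suffices to prove that $\mathbf{H}$ is PSD.

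Next I would exploit the product structure of the kernel. For any $i,j\in I$,
\[
h(i,j;\bm{\theta}_k) = \alpha_k \prod_{l\in L^{-s}} h_l(i,j), \qquad h_l(i,j) := \exp\!\left(-\frac{\Vert X^{(l)}_i - X^{(l)}_j\Vert_l^2}{2(\beta^{(l)}_k)^2}\right).
\]
Since $\alpha_k>0$, positive scalar multiples preserve PSDness, and by the Schur (Hadamard) product theorem an entrywise product of PSD matrices is PSD, it suffices to show that each stream-specific Gram matrix $\mathbf{H}_l := [h_l(i,j)]_{i,j\in I}$ is PSD.

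The key step is then to argue that each $\mathbf{H}_l$ is PSD, which reduces to the classical fact that the squared-exponential of a Hilbertian semi-metric is a positive-definite kernel. Concretely, when $\Vert\cdot\Vert_l$ arises from a (semi-)inner product on a function space (as is the case for the standard $L^2$-type semi-metrics used in FDA, including the one implicitly used in the rest of the paper), the squared semi-distance $(i,j)\mapsto \Vert X^{(l)}_i - X^{(l)}_j\Vert_l^2$ is conditionally negative definite; this is immediate from the identity $\Vert x-y\Vert^2 = \Vert x\Vert^2 + \Vert y\Vert^2 - 2\langle x,y\rangle$ restricted to vectors with zero coefficient sum. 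By Schoenberg's theorem, this implies that $\exp(-\Vert X^{(l)}_i - X^{(l)}_j\Vert_l^2/(2(\beta^{(l)}_k)^2))$ is a positive-definite kernel for every $\beta^{(l)}_k>0$, so $\mathbf{H}_l$ is PSD. Chaining this with the previous two reductions yields $\mathbf{M}\succeq 0$.

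The main obstacle I anticipate is the last step: one must be explicit that $\Vert\cdot\Vert_l$ is assumed to come from a Hilbert space structure (or at least that its square is conditionally negative definite) in order to legitimately invoke Schoenberg. Without this, a generic semi-metric need not yield a PSD exponential kernel. I would therefore state this as a mild standing assumption on the semi-metric choices (satisfied by the $L^2$ or FPC-score-based semi-metrics used in the paper), after which every other piece of the argument---symmetry, the decomposition $\mathbf{M} = \mathbf{H} + \mathbf{D}$, the product factorization, and Schur's theorem---is routine.
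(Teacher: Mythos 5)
Your argument is correct and, as far as can be judged (the paper defers its own proof to Section~A.1 of the supplement), it follows the natural route: symmetry plus positive semi-definiteness of the kernel Gram matrix, with the observed-noise term handled separately. Each step is sound: the decomposition $\mathbf{M}=\mathbf{H}+\mathbf{D}$ with $\mathbf{D}=\mathrm{diag}(\sigma_k^2,\ldots,\sigma_k^2,0)\succeq 0$ correctly accounts for the fact that the noise is added only to the historical block; the factorization of $h$ into a product over streams combined with the Schur product theorem is valid; and Schoenberg's theorem applies because the squared semi-metric is conditionally negative definite. You are also right to flag that this last step is the only place where an assumption on $\Vert\cdot\Vert_l$ is genuinely needed, and that assumption is satisfied here: the paper's semi-metric is exactly the Euclidean distance between the finite-dimensional vectors of FPC scores $\bigl(\int_{\mathcal{T}_{t^*}}[X_i^{(l)}-X_j^{(l)}]\psi_m^{(l)}\,dt\bigr)_{m=1}^{K(l)}$. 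That observation also suggests a shortcut that a referee (or the supplement) might prefer: since each signal embeds into $\mathbb{R}^{K(l)}$ via its FPC scores, $h$ is literally an anisotropic squared-exponential (ARD) kernel on the concatenated score vectors, whose positive semi-definiteness is classical, so Schoenberg and the Hadamard-product argument can be bypassed entirely. Your version is more general (it covers any Hilbertian semi-metric, including infinite-dimensional ones), at the cost of invoking heavier machinery; either way the lemma is established. One cosmetic point: you should state explicitly that $\alpha_k>0$ (and $\beta_k^{(l)}\neq 0$), which the paper assumes implicitly of its hyperparameters.
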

\begin{proof}
See Section A.1 in supplementary document.
\end{proof}

One possible semi-metric that represents the similarity between two signals can be derived based on FPCA. Let $\mathcal{T}_{t^*} \subset \mathcal{T}$ denote the time domain for observations up to $t^*$. Note that we define $\mathcal{T}_{t^*}$ since the signals of the in-service unit $r$ are available only for $t\in \mathcal{T}_{t^*}$. For $i,j\in I$, $l \in L^{-s}$, and $t \in \mathcal{T}_{t^*}$, the semi-metric based on FPCA for two signals $X^{(l)}_i(t)$ and $X^{(l)}_j(t)$ can be represented as 
\begin{align}
\big\Vert &X^{(l)}_i(t)  - X^{(l)}_{j}(t) \big\Vert_l \nonumber \\
&= \sqrt{\sum_{k=1}^{K(l)}\bigg(\int_{\mathcal{T}_{t^*}} \big[ X^{(l)}_i(t) - X^{(l)}_{j}(t) \big] \psi^{(l)}_k(t)dt\bigg)^2 },
\end{align}
where $\psi_k^{(l)}(t)$ is $k$th eigenfunction derived by the FPCA on $X^{(l)}_i(t)$ for $i\in I$ and $t \in \mathcal{T}_{t^*}$, and $K(l)$ is the number of eigenfunctions. We would like to point out that $\int_{\mathcal{T}_{t^*}} \big[X^{(l)}_i(t) - X^{(l)}_{j}(t)\big] \psi^{(l)}_k(t)dt$ is the difference between the FPC scores of $X^{(l)}_i(t)$ and $X^{(l)}_j(t)$ associated with $\psi^{(l)}_k(t)$, which implies that this metric measures the Euclidean distance between two vectors composed of the corresponding FPC scores. 

In order to optimize the hyperparameter $\bm{\Theta}_k = [\bm{\theta}'_k, \sigma_k]'$ for the multivariate Gaussian distribution \eqref{eq:3.6}, we maximize the marginal log-likelihood function of $\boldsymbol{\xi}_{k}=[\xi_{1k},...,\xi_{Nk}]'$ given $\bm{X}^{(-s)}_1(t), ..., \bm{X}^{(-s)}_N(t)$. Let $\mathcal{X}$ denote the observations of signals $X^{(l)}_i(t)$ for  $l\in L^{-s}$ units $i\in I^{\mathcal{H}}$, that is $\mathcal{X} = \big\lbrace X^{(l)}_i(t)| {t\in T_i^{(l)}, l\in L^{-s}, i\in I^{\mathcal{H}}} \big\rbrace$ where $T_i^{(l)}=\lbrace t_{iu} | i\in I^{\mathcal{H}}, l\in L^{(-s)}, u=1,...,p^{(l)}_i \rbrace$. Also, let  $z_{ik}$ denote the true underlying latent values corresponding to the FPC scores $\xi_{ik}$ and let $\bm{z}_k=[z_{1k},...,z_{Nk}]'$. Then the marginal likelihood is given as 
\begin{align*}
    P(\boldsymbol{\xi}_{k}&|\mathcal{X}, \bm{\Theta}_k)\\ 
    & = \int P(\boldsymbol{\xi}_{k} |\bm{z}_k,\mathcal{X}, \bm{\Theta}_k)P(\bm{z}_k|\mathcal{X}, \bm{\Theta}_k)d\bm{z}_k\\
    & = \int \prod_{i\in I^{\mathcal{H}}} P(\xi_{ik}|z_{ik},\mathcal{X}, \bm{\Theta}_k) P(\bm{z}_k|\mathcal{X}, \bm{\Theta}_k )d\bm{z}_k
\end{align*}
where $P(\xi_{ik}|z_{ik}, \mathcal{X}, \bm{\Theta}_k) = \mathcal{N}(0, \sigma_k^2)$ and $P(\bm{z}_k|\mathcal{X}, \bm{\Theta}_k) = \mathcal{N}\big(\boldsymbol{0}_{N}, \mathbf{C}_k\big)$. The second equality follows from the fact that the error is an additive Gaussian noise. Thus, $\prod_{i\in I^{\mathcal{H}}} P(\xi_{ik}|z_{ik},\mathcal{X}, \bm{\Theta}_k) = \mathcal{N}(\boldsymbol{0}_{N}, \sigma_k^2\mathbf{I}_{N})$ and the log-likelihood of $P(\boldsymbol{\xi}_{k}|\mathcal{X}, \bm{\Theta}_k)$ is 
\begin{align*}
\log P(\boldsymbol{\xi}_k | \mathcal{X}, \bm{\Theta}_k) &= -\frac{1}{2} \big\langle \bm{\Xi}_k, (\mathbf{C}_k + \sigma_k^2 \mathbf{I}_N)^{-1} \big\rangle_{tr} \\
     & \qquad \> -\log\big|\mathbf{C}_k + \sigma_k^2 \mathbf{I}_N\big| - \frac{n}{2}\log 2 \pi,   
\end{align*}
where $\langle \bm{A}, \bm{B} \rangle_{tr} = trace(\bm{A}\bm{B})$ and $\bm{\Xi}_k = \boldsymbol{\xi}_k \boldsymbol{\xi}'_k $. 
As a consequence, the optimized hyperparameters denoted by  $\bm{\Theta}_k^* = [\bm{\theta}^{*'}_{k}, \sigma_k^*]'$ are found by maximizing the marginal log-likelihood. More formally, we have
\begin{align*}
    \bm{\Theta}_k^* = [\bm{\theta}^{*'}_{k}, \sigma_k^*]' = \argmax_{\bm{\Theta}_k} \log P(\boldsymbol{\xi}_k | \mathcal{X}, \bm{\Theta}_k).
\end{align*}

Following multivariate normal theory, the posterior predictive distribution of $\xi_{rk}$, given \eqref{eq:3.6} and $\bm{\Theta}_k^*$, is derived as
\begin{equation}\label{eq:3.9}
P(\xi_{rk} | \boldsymbol{\xi}_{k},\mathcal{X}, \bm{\Theta}_k^*) = \mathcal{N}\big(\hat{\xi}_{rk}, \hat{\sigma}_{rk}^2 \big)
\end{equation}
with
\begin{align*}
&\hat{\xi}_{rk} = \mathbf{\hat{c}}_k'(\mathbf{\hat{C}}_k+ (\sigma_k^*)^2\mathbf{I}_{N})^{-1}\boldsymbol{\xi}_{k},\\
&\hat{\sigma}_{rk}^2 = \hat{c}^{(r)}_k  -\mathbf{\hat{c}}_k'(\mathbf{\hat{C}}_k + (\sigma_k^*)^2\mathbf{I}_{N})^{-1}\mathbf{\hat{c}}_k.
\end{align*}

Here we note that for each $k\in \lbrace 1,...,K \rbrace$, we can derive $\hat{\xi}_{rk}$ and $\hat{\sigma}_{rk}^2$ using an independent GP as the FPC scores from different orthonormal basis functions are uncorrelated. This facilitates scalability of computation  as for different $k$ we can derive $\lbrace \hat{\xi}_{rk} \}$ and $\{ \hat{\sigma}_{rk}^2 \}$ in parallel. As shown in the computational complexity derivations in section \ref{S:6}, this aspect is important specifically in a real-time environment where predictions need to be continuously updated. 

Now combining equations \eqref{eq:3.3} and \eqref{eq:3.9}, we obtain the predictive mean $\hat{X}_r(t)$ and variance $\hat{\sigma}_{r}^2(t)$ of ${X}_r(t)$ as follows
\begin{equation}
\begin{split}
& \hat{X}_r(t) =  \mu(t) + \sum_{k=1}^{K}\hat{\xi}_{rk}\phi_{k}(t), \\
& \hat{\sigma}^2_{r}(t) =  \sigma^2 _\mu(t)+ \sum_{k=1}^{K}\hat{\sigma}_{rk}^2 \phi_{k}^{2}(t) + \sigma^2_\epsilon . 
\end{split}
\end{equation}

Here note that $\mu(t)$, $\sigma^2_\mu(t)$ and $\sigma^2_\epsilon$ are model parameters corresponding to the estimated FPCA model in \eqref{eq:3.3}, where $\sigma^2_\epsilon$ denotes the estimated variance of $\epsilon(t)$. Here we  recall that $\hat{X}_r(t) =\hat{X}^{(s)}_r(t)$ as the index $(s)$ is dropped for the target stream. 

\subsection{Empirical Bayesian Updating with On-line Data}\label{S:4.3}

In the previous section, we derive a prior for $\hat{X}_r(t)$ and $\hat{\sigma}_{r}^2(t)$ using data observed from streams $l\in L^{-s}$. Here, we develop an empirical Bayesian approach to update $\hat{X}_r(t)$ and $\hat{\sigma}_{r}^2(t)$ given the target stream ($s$) observations from the in-service unit $r$. Specifically, given the prior distributions $P(\xi_{rk}) = \mathcal{N}\big(\hat{\xi}_{rk}, \hat{\sigma}^2_{rk}\big)$ for each $k$ and given the observations $\bm{X}_r(\bm{t})$ at $\bm{t} = [t_{r1},...,t_{rp_r}]'$, the posterior $P(\xi_{rk} \> | \> \bm{X}_r(\bm{t}))$ is given in Proposition \ref{prop:1}. 


\begin{proposition}
Given that $X_r(t) = \mu(t) + \sum_{k=1}^{K}\xi_{rk}\phi_k(t) + \epsilon(t)$, where the prior distribution of $\xi_{rk}$ is $\mathcal{N}\big(\hat{\xi}_{rk}, \hat{\sigma}_{rk}^2\big)$, and the FPC scores are pairwise independent. Then, the posterior distribution of the FPC scores, such that $\xi^{*}_{rk} = P(\xi_{rk} \> | \> \bm{X}_r(\bm{t}))$, is given as

\begin{equation*}
[\xi^{*}_{r1}, ...,\xi^{*}_{rK}]' \sim \mathcal{N}(\bm{\xi}^{*}, \bm{\Sigma}^{*}) 
\end{equation*}
where 
\begin{align*}
  &\bm{\xi}^{*} = \bm{\Sigma}^{*}\bigg(\bm{\Sigma}_0^{-1}\bm{\mu}_0 + \frac{1}{\sigma_\epsilon^2}\bm{\Phi}(\mathbf{t})'(\bm{X}_r(\mathbf{t})-\bm{\mu}(\mathbf{t})) \bigg),   \\
  &\bm{\Sigma}^{*} = \bigg(\frac{1}{\sigma_\epsilon^2}\bm{\Phi}(\mathbf{t})'\bm{\Phi}(\mathbf{t}) + \bm{\Sigma}_0^{-1} \bigg)^{-1}
\end{align*}
with 
\begin{equation*}
\begin{split}
& \bm{\mu}_0 = \big[\hat{\xi}_{r1},...,\hat{\xi}_{rK} \big]', \\
& \bm{\Sigma}_0 = \text{diag}\big(\hat{\sigma}_{r1}^2, ..., \hat{\sigma}_{rK}^2\big), \\
& \bm{X}_r(\mathbf{t}) = [X_r(t_{r1}),..., X_r(t_{rp_r})]', \\
& \bm{\mu}(\mathbf{t}) = [\mu(t_{r1}),..., \mu(t_{rp_r})]',
\end{split}
\end{equation*}
\begin{equation*}
\bm{\Phi}(\mathbf{t}) = \begin{bmatrix} \phi_1(t_{r1}) & \dots & \phi_K(t_{r1}) \\ 
                                  \vdots       & \ddots & \vdots  \\
                                  \phi_1(t_{rp_r}) & \dots & \phi_K(t_{rp_r})  \\
                                  \end{bmatrix}.
\end{equation*}
\label{prop:1}
\end{proposition}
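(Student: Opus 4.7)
The plan is to recognize that, once we stack the observation times, the model reduces to a standard Bayesian linear-Gaussian regression in the unknown vector $\bm{\xi}_r = [\xi_{r1},\ldots,\xi_{rK}]'$, and then apply the conjugate update formula. First, I would rewrite the generative equation $X_r(t) = \mu(t) + \sum_{k=1}^{K}\xi_{rk}\phi_k(t) + \epsilon(t)$ evaluated at the sampling times $\bm{t} = [t_{r1},\ldots,t_{rp_r}]'$ in the compact form
\begin{equation*}
\bm{X}_r(\mathbf{t}) = \bm{\mu}(\mathbf{t}) + \bm{\Phi}(\mathbf{t})\,\bm{\xi}_r + \bm{\epsilon}_r, \qquad \bm{\epsilon}_r \sim \mathcal{N}(\mathbf{0},\, \sigma_\epsilon^2 \mathbf{I}_{p_r}),
\end{equation*}
using exactly the $\bm{\Phi}(\mathbf{t})$ defined in the statement. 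This gives the likelihood $P(\bm{X}_r(\mathbf{t})\,|\,\bm{\xi}_r) = \mathcal{N}\big(\bm{\mu}(\mathbf{t}) + \bm{\Phi}(\mathbf{t})\bm{\xi}_r,\ \sigma_\epsilon^2 \mathbf{I}_{p_r}\big)$.

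Next, because the priors on the individual scores $\xi_{rk}$ are given as pairwise independent Gaussians $\mathcal{N}(\hat{\xi}_{rk}, \hat{\sigma}_{rk}^2)$, the joint prior on $\bm{\xi}_r$ is $\mathcal{N}(\bm{\mu}_0, \bm{\Sigma}_0)$ with $\bm{\mu}_0$ and the diagonal $\bm{\Sigma}_0$ as stated. With both likelihood and prior Gaussian and the likelihood linear in $\bm{\xi}_r$, standard conjugate Bayesian updating immediately yields a Gaussian posterior. I would derive it by writing $-2\log P(\bm{\xi}_r\,|\,\bm{X}_r(\mathbf{t}))$ up to an additive constant as
\begin{equation*}
\tfrac{1}{\sigma_\epsilon^2}\big(\bm{X}_r(\mathbf{t})-\bm{\mu}(\mathbf{t})-\bm{\Phi}(\mathbf{t})\bm{\xi}_r\big)'\big(\bm{X}_r(\mathbf{t})-\bm{\mu}(\mathbf{t})-\bm{\Phi}(\mathbf{t})\bm{\xi}_r\big) + (\bm{\xi}_r-\bm{\mu}_0)'\bm{\Sigma}_0^{-1}(\bm{\xi}_r-\bm{\mu}_0),
\end{equation*}
and then collecting quadratic and linear terms in $\bm{\xi}_r$. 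The quadratic-in-$\bm{\xi}_r$ coefficient reads $\tfrac{1}{\sigma_\epsilon^2}\bm{\Phi}(\mathbf{t})'\bm{\Phi}(\mathbf{t}) + \bm{\Sigma}_0^{-1}$, which identifies the posterior precision $(\bm{\Sigma}^{*})^{-1}$; the linear-in-$\bm{\xi}_r$ coefficient is $-2\big(\tfrac{1}{\sigma_\epsilon^2}\bm{\Phi}(\mathbf{t})'(\bm{X}_r(\mathbf{t})-\bm{\mu}(\mathbf{t})) + \bm{\Sigma}_0^{-1}\bm{\mu}_0\big)'$, and completing the square yields the posterior mean $\bm{\xi}^{*}$ exactly as claimed.

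The calculation has no real analytic obstacle; the one place to be careful is the bookkeeping in the completion of the square, namely checking that the cross term in $\bm{\xi}_r$ carries the correct sign so that $\bm{\xi}^{*} = \bm{\Sigma}^{*}\big(\bm{\Sigma}_0^{-1}\bm{\mu}_0 + \tfrac{1}{\sigma_\epsilon^2}\bm{\Phi}(\mathbf{t})'(\bm{X}_r(\mathbf{t})-\bm{\mu}(\mathbf{t}))\big)$ and not its negative. I would also briefly justify that $\bm{\Sigma}^{*}$ is well-defined: since $\bm{\Sigma}_0$ is diagonal with strictly positive entries $\hat{\sigma}_{rk}^2 > 0$, its inverse exists, and adding the positive semi-definite term $\tfrac{1}{\sigma_\epsilon^2}\bm{\Phi}(\mathbf{t})'\bm{\Phi}(\mathbf{t})$ keeps the precision positive definite, so the inverse in $\bm{\Sigma}^{*}$ is legitimate regardless of whether $\bm{\Phi}(\mathbf{t})$ has full column rank (which would typically fail when $p_r < K$, i.e., in the early real-time regime the method is designed for). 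This positive-definiteness remark, rather than any hard computation, is really the only subtle point in the argument.
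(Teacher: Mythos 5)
Your proposal is correct and follows essentially the same route the paper takes: stack the observations into the linear-Gaussian model $\bm{X}_r(\mathbf{t}) = \bm{\mu}(\mathbf{t}) + \bm{\Phi}(\mathbf{t})\bm{\xi}_r + \bm{\epsilon}_r$ with the independent Gaussian priors forming $\mathcal{N}(\bm{\mu}_0,\bm{\Sigma}_0)$, and complete the square in $\bm{\xi}_r$ to read off the posterior precision and mean. Your added remark that the posterior precision is positive definite even when $\bm{\Phi}(\mathbf{t})$ lacks full column rank (the sparse-data regime $p_r < K$) is a sound and relevant observation, but the core argument matches the paper's.
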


\begin{proof}
See Section A.2 in supplementary document.
\end{proof}

Based on the updated FPC scores for in-service unir $r$, the posterior predicted mean $\hat{X}^{*}_r(\tilde{t})$, of $X_r(t)$ for any future time point $\tilde{t}\geq t^{\ast}$ where $\tilde{t} \in \mathcal{T} $ is given as 

\begin{equation*}
\hat{X}^{*}_r(\tilde{t}) =  \mu(\tilde{t}) + \sum_{k=1}^{K}\xi^{*}_{rk}\phi_{k}(\tilde{t}).
\end{equation*}
Similarly, the posterior variance $\big(\hat{\sigma}^{*}_r(\tilde{t})\big)^2$ can be computed as
\begin{align*}
\big(&\hat{\sigma}^{*}_r(\tilde{t})\big)^2 =\\
&\qquad \sigma^2_\mu(t)+ \sum_{k_1=1}^{K}\sum_{k_2=1}^{K}\big(\bm{\Sigma}^{*}\big)_{k_1k_2}\phi_{k_1}(\tilde{t})\phi_{k_2}(\tilde{t}) + \sigma_\epsilon^2, 
\end{align*}
where $\big(\bm{\Sigma}^{*}\big)_{k_1k_2}$ indicates the $(k_1, k_2)$th element of the covariance matrix $\bm{\Sigma}^{*}$.


Despite our focus on the target stream $s$ we note that our framework can predict every individual stream for the in-service unit $r$. This ability to provide individualized predictions is a key feature of the proposed methodology compared the data fusion literature that predicts a single aggregated signal. Further, one differentiating factor is that we allow irregularly sampled data from each stream where time points of each signal $\lbrace t^{(l)}_{iu}: u=1,...,p^{(l)}_i\rbrace \subset \mathcal{T}$ do not need to be identical or regularly spaced across streams. Indeed, such situations are quite common in practice because most multi-stream data is gathered from different types of sensors. Therefore, the proposed approach is applicable to a wide array of practical situations.  



\section{Numerical Case Study}
\label{S:5}
\subsection{General Settings}

In this section, we discuss the general settings used to assess the proposed model, denoted as FPCA-GP. We evaluate the model by performing experiments with both synthetic and real-world data. We report the prediction accuracy at varying time points $t^{\ast}$ for the partially observed unit $r$. Specifically, for the time domain $\mathcal{T} = [a,b]$, we assume that the on-line signals from the in-service unit $r$ are partially observed in the range of $[a, t^{\ast}=a+\gamma(b-a)]$, referred as $\gamma$-observation. We set $\gamma = 25\%, 50\%$, and $75\%$ for every case study. In the extrapolation interval $[t^{\ast},b]$, we use the mean absolute error (MAE) between the true signal value $X^T_r(t)$ and its predicted value $\hat{X}^{(s)}_r(t)$ at $U$ evenly spaced test points (denoted as $t_{u}$ for $u=1,...,U$) as the criterion to evaluate our prediction accuracy.

\begin{equation}
\text{MAE} = \frac{1}{U}\sum_{u=1}^{U}\big|\hat{X}^{(s)}_r(t_u) - X^T_r(t_u) \big|, \ t_u\in [t^{\ast},b] .
\label{eq:5.1}
\end{equation}

We report the distribution of the errors across $G$ repetitions using a group of boxplots representing the MAE for the testing unit $r$ at diffrent $\gamma$-observation percentiles. Further, we benchmark our method with two other reference methods for comparison: (i) The FPCA approach for single stream settings denoted as FPCA-B. In this method we only consider the target stream $s$ \cite{zhou2011degradation, kontar2018nonparametric}. Note that we incorporate our Bayesain updating procedure to update predictions as new data is observed. (ii) The Bayesian mixed effect model with a general polynomial function whose degree is determined through an Akaike information criteria (AIC) \cite{rizopoulos2011dynamic,son2013evaluation, kontar2017remaining}. We denote this methods as ME. The ME model intrinsically applies a Bayesian updating scheme as more data is obtained from the in-service unit. Detailed codes for both reference methods are included in our supplementary materials.

\subsection{Numerical Study with Synthetic Data}
First, we show the numerical results of the proposed model performed on synthetic data. For this experiment, we assume that two streams $l\in \lbrace 1, 2\rbrace$ of data are observed from two different sensors embedded in each unit. The target stream of interest is $l=1$. To generate signals possessing heterogeneity, we suppose there are two separating environments, denoted by environment I and II. We generate signals for each unit using different underlying functions depending on which environment the unit is in. This is illustrated in Figure \ref{fig:first}. As shown in the figure, the underlying trend of the target stream ($l=1$) will vary under different profiles of stream $l=2$. To relate this setting with real-world application, consider $l=1$ as the degradation level and $l=2$ as the temperature profile. Then from Figure \ref{fig:first}, we have that units operating under different temperature profiles will exhibit different trends.

\begin{figure}[t!]
\begin{center}
\centerline{\includegraphics[width=\columnwidth]{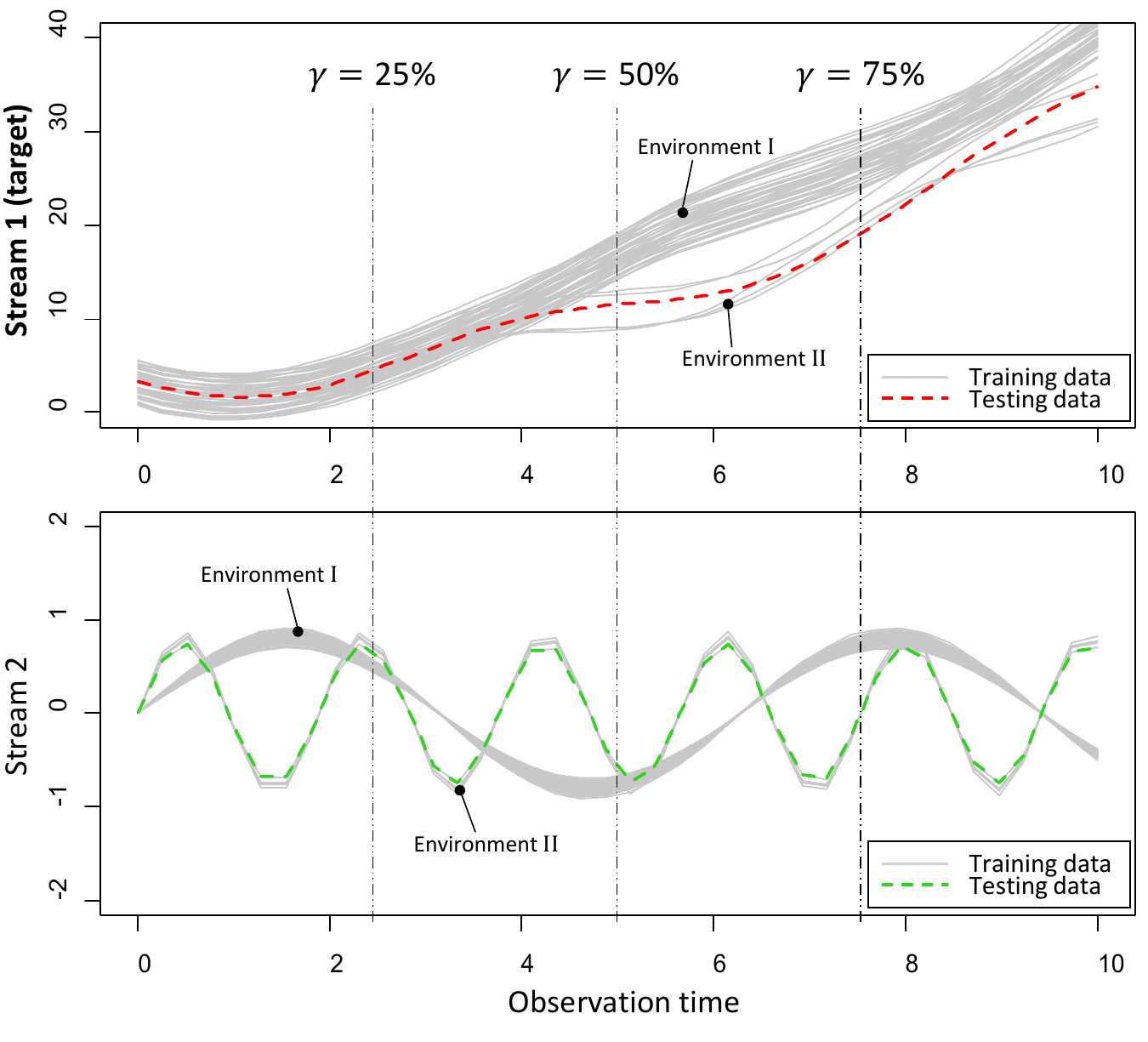}}
\caption{Illustration of generated true curves (90\% heterogeneity case).}
\label{fig:first}
\end{center}
\vskip -0.2in
\end{figure}

\begin{figure*}[t!]
\begin{center}
\centerline{\includegraphics[height=3.1in]{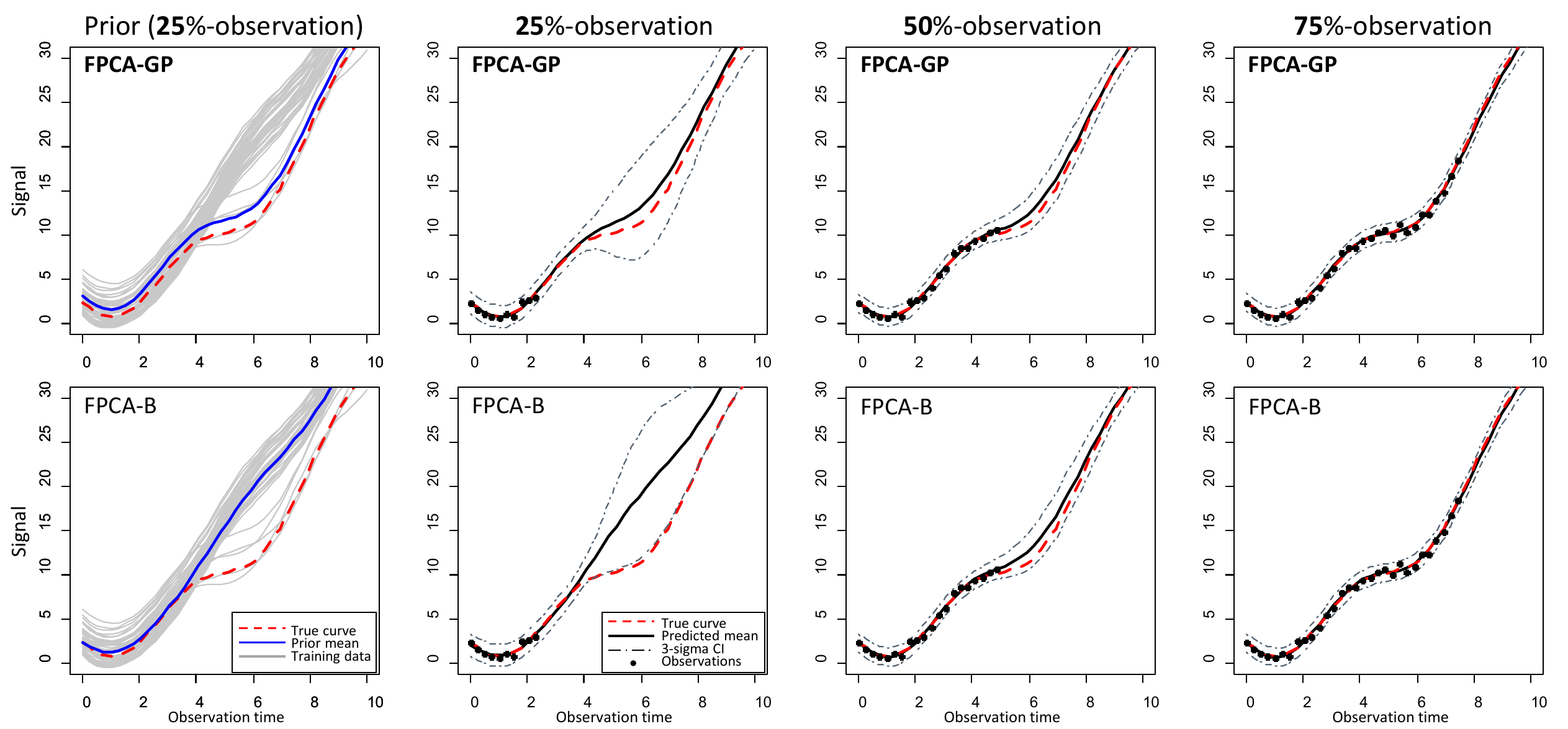}}
\caption{Illustration of the FPCA-GP and FPCA-B prediction (90\% heterogeneity case). The first column illustrates the respective prior mean of FPCA-GP and FPCA-B before updating in the case of 25\%-observation.}
\label{fig:second}
\end{center}
\vskip -0.2in
\end{figure*}

We generate a training set of $N=50$ units and one testing unit $r$ whose signals are partially observed. Also we repeat the experiment $G=100$ times. Historical units are operated in either environment I or II whereas the in-service unit is operated in environment II. The population of historical units is created according to three levels of heterogeneity: (i) \textit{0\% heterogeneity} where  all units in the historical database are operated under environment II (similar to that of the testing unit) (ii) \textit{50\% heterogeneity} where 25 units are distributed to each environment (iii) \textit{90\% heterogeneity} where only 5 units are assigned to environment II. Conducting the experiments across a homogeneous setting and a heterogeneous setting, where the in-service unit belongs to the minority group with only 10\% ratio, will allow us to investigate  the robustness of our approach.

For units in environment I, the signals from respective streams $l=1,2$ are generated according to $X_i^{(1)}(t) = 0.3t^2 - 2\sin{(w_{1,i}\pi t)} + w^{\text{I}}_{2,i}$ and $X_i^{(2)}(t)=2w_{1,i}\sin t$, where $w_{1,i}\sim\mathcal{N}(0.4, 0.03^2)$ and $w^{\text{I}}_{2,i}\sim \mathcal{U}(0, 5)$, where $\mathcal{U}$ denotes the uniform distribution. For units in environment II, we generate the signals as $X_i^{(1)}(t) = 0.3t^2 - 2\sin{(w_{1,i}\pi t^{0.85})} + 3(\arctan(t-5) + \frac{\pi}{2})+ w^{\text{II}}_{2,i}$ and $X_i^{(2)}(t)=2w_{1,i}\sin 0.3t$ where $w^{\text{II}}_{2,i}\sim \mathcal{U}(1.5, 6.5)$. Measurement error $\epsilon \sim \mathcal{N}(0,0.05^2)$ is assumed similar across both streams.

Figure \ref{fig:first} illustrates training signals $X^{(1)}_i(t)$ in the case of 90\% heterogeneity. It is crucial to note that at early stages (ex: $\gamma = 25\%$), it is hard to distinguish between the two different trends in stream $l=1$. We model that on purpose to check if our model can leverage information from stream $l=2$ to uncover the underlying heterogeneity at early stages. \textit{This in fact is a common feature in many health related applications, as many diseases remain dormant at early stages and it is only through measuring other factors we can predict there evolution early on.}

\begin{figure}[b!]
\begin{center}
\centerline{\includegraphics[width=\columnwidth]{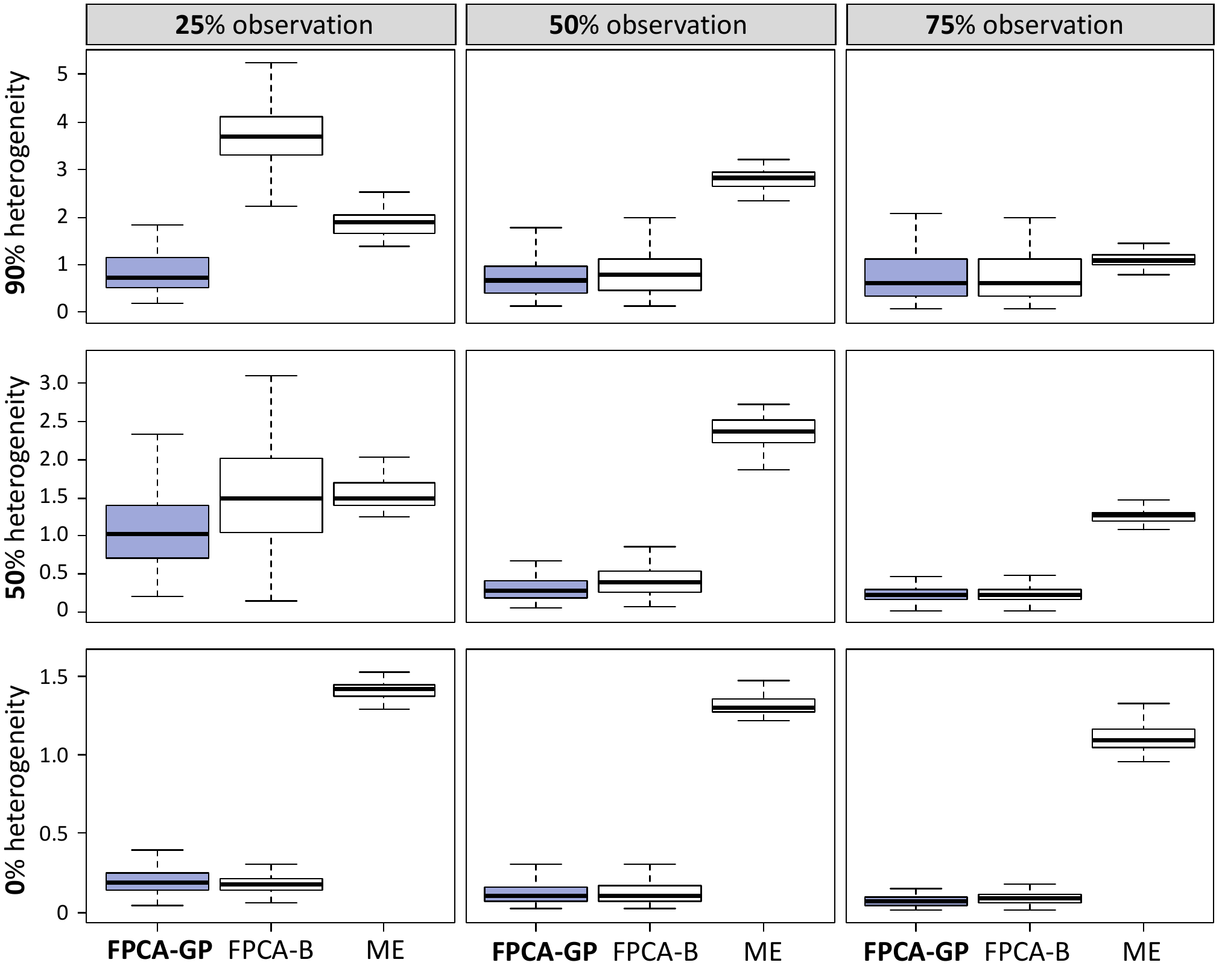}}
\caption{Box plots of MAEs for comparative models for synthetic data.}
\label{fig:boxplot}
\end{center}
\end{figure}

The results are illustrated in both Figure \ref{fig:second} and \ref{fig:boxplot}. Based on the figures we can obtain some important insights. First, the FPCA-GP clearly outperforms the FPCA-B. This is specifically obvious at early stages ($\gamma = 25\%$) and when the data exhibits heterogeneity (90\% and 50\% heterogeneity). This confirms the ability of our model to borrow strength from information across different streams to discern the heterogeneity and enhance predictive accuracy at early stages. This result is very motivating specifically since at  $\gamma = 25\%$ data from in-service unit $r$ is sparse and all signals in stream $s$ have similar behaviour which makes it hard to uncover future heterogeneity. It further implies that the FPC scores of the testing unit are appropriately estimated by the proposed approach, as shown in the first column of Figure \ref{fig:second}. From the figure, we observe that the estimated prior mean from the FPCA-GP appropriately follows the signals in environment II, whereas the prior mean from the FPCA-B follows the signals in environment I, which is the majority. Second, as expected, prediction errors significantly decrease as the percentiles increase. Thus, our Bayesian updating framework is able to efficiently utilize new collected data and provide more accurate predictions as $t^{\ast}$ increases. Third, the results show that ME behaved the worst and its predictions accuracy merely decreases at later stages. This result illustrates the vulnerabilities of parametric modeling and demonstrates the ability of our non-parametric modeling to avoid model misspecifications. Fourth, the results confirm that even in the case where other streams have no effect on the target stream (\textit{0\% heterogeneity}) the FPCA-GP is competitive compared to FPCA-B. This highlights the robustness of the FPCA-GP.

\subsection{Numerical Study with Real-world Data}
In this section, we discuss the numerical study using real-world data provided by the National Aeronautics and Space Administration (NASA). The dataset contains degradation signals collected from multiple sensors on an aircraft turbofan engine. This dataset was generated from a simulation model, developed in Matlab Simulink, called commercial modular aero-propulsion system simulation (C-MAPSS). This system simulates degradation signals from multiple-sensors, installed in several components of an aero turbofan engine, under a variety of environmental conditions. The list of the components includes Fan, LPC, HPC, and LPT, and are illustrated in Figure \ref{fig:NASA}. Refer to \citet{saxena2008damage} for more details about turbofan engine data. The dataset is available at \citet{saxena2008PHM08}. The dataset is composed of 21 sensor streams from 100 training and 100 testing units. Following the analysis of \citet{Liu2013Data}, we select the 11 most crucial streams. Some signals from these streams are shown in Figure \ref{fig:NASAtrend}. We provide the detailed list and description of sensors in the supplementary materials. In our analysis we truncate the time range $(0,100]$ and predict the testing signal over the time range $(100, 160]$.

\begin{figure}[t!]
\begin{center}
\centerline{\includegraphics[width=\columnwidth]{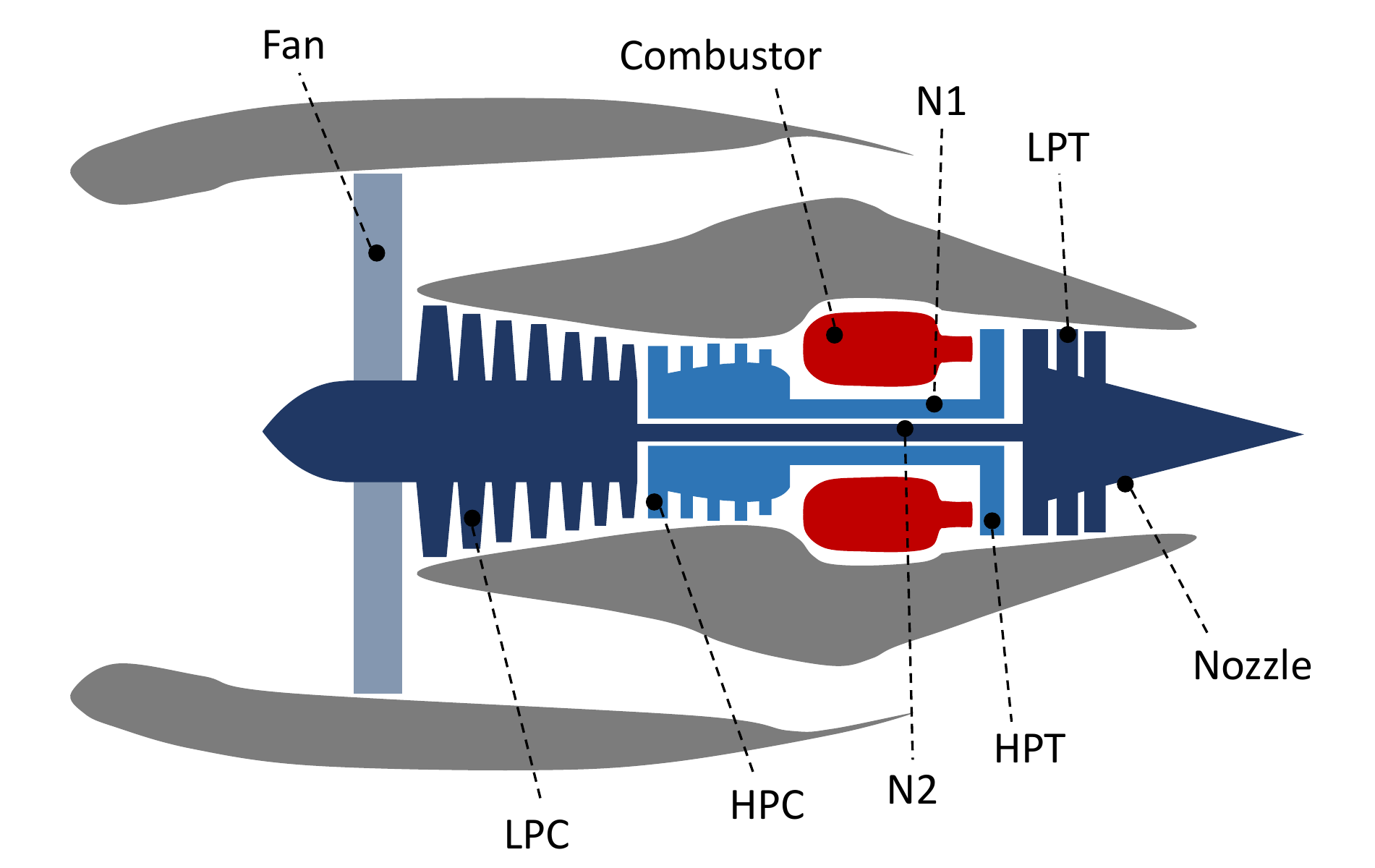}}
\caption{A schematic diagram of turbofan engine \cite{liu2012user}.}
\label{fig:NASA}
\end{center}
\vskip -0.2in
\end{figure}

\begin{figure}[t!]
\begin{center}
\centerline{\includegraphics[width=\columnwidth]{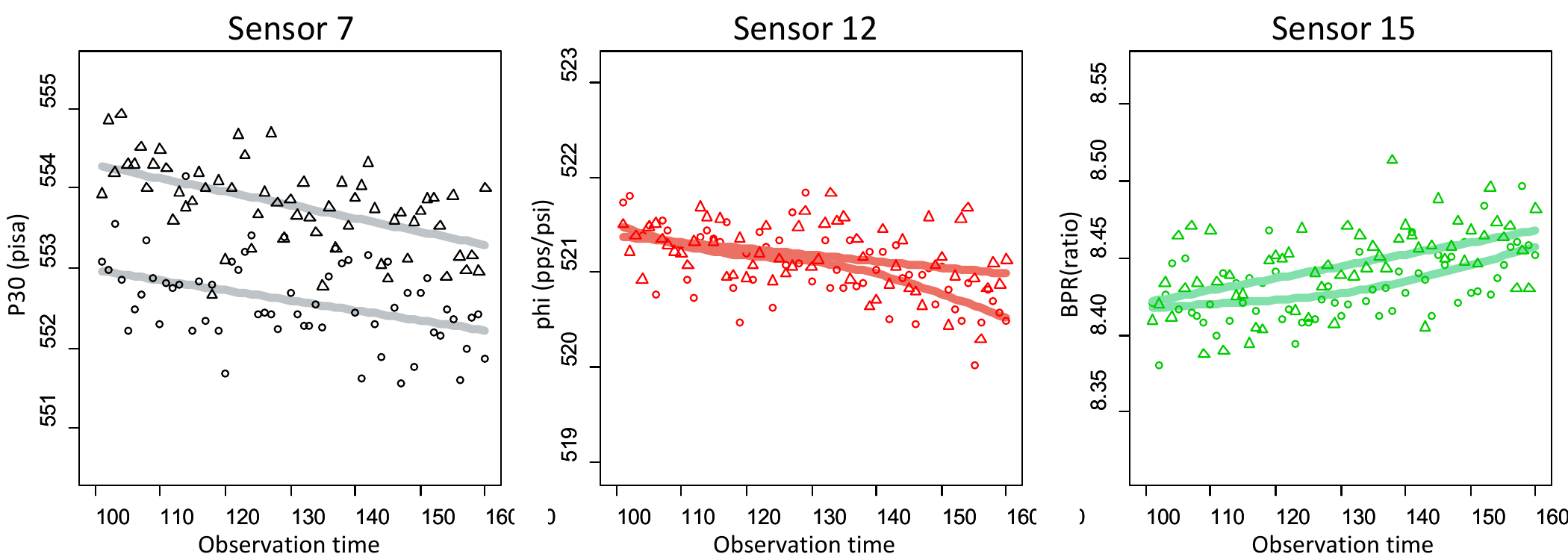}}
\caption{Selective examples for degradation signals from
turbofan engine data.}
\label{fig:NASAtrend}
\end{center}
\vskip -0.2in
\end{figure}

Table \ref{tab:result} demonstrates that the MAE results of stream 4 and 15. Note that these two streams have shown to have the largest impact on failure \cite{fang2017multistream} and therefore, due to space limitation, we only focus on their predictive results. Results for other signals are provided in the supplementary materials. Note that we include the standard deviation of MAE across the testing units.

The results clearly show that our approach is far more superior than benchmarks for the real-world data. For all provided cases, the mean of MAE for the FPCA-GP is less than that of the FPCA. Once again this highlights the importance of leveraging information from all streams of the data. Another important insight from this study is that our model was able to outperform the ME even though the curves from Figure \ref{fig:NASAtrend} seem to exhibit a clear parametric trend. This further highlights the robustness of our method and its ability to safeguard against parametric misspecifications.

\begin{table}[t!]
\caption{Mean and standard deviation (STD.) of comparative models performed on the NASA dataset. The values for the sensor 15 are scaled by $\times 10^{-2}$ . F-GP and F-B indicates FPCA-FP and FPCA-B, respectively. The best result in each case is boldfaced.}
\label{sample-table}
\vskip 0.15in
\begin{center}
\begin{small}
\begin{sc}
\begin{tabular}{p{8.4mm}p{7.5mm}p{7.5mm}p{7.5mm}p{7.5mm}p{7.5mm}p{7.5mm}}
\toprule
 & \multicolumn{3}{c}{Sensor 4} & \multicolumn{3}{c}{Sensor 15 ($\times 10^{-2}$)} \\
\midrule
Model & 25\% & 50\% & 75\% & 25\% & 50\% & 75\% \\
\midrule
F-GP   & \textbf{3.26}  & \textbf{3.21} & \textbf{3.19}     & \textbf{1.62}& \textbf{1.62} & \textbf{1.57} \\
(std.) & (0.36)   & (0.42)  & (0.48)      &(0.18)& (0.19) &(0.33)\\
F-B    & 3.49  & 3.37  & 3.31    &1.76  &1.75 &1.63\\
(std.) & (0.45)   & (0.56)  & (0.54)          & (0.28)  &(0.31) & (0.36)\\
ME     & 3.51   & 3.38  & 3.34  &1.79&1.77 &1.65\\
(std.) & (0.45)   & (0.59)  & (0.55)  &(0.28)&(0.32)&(0.37)\\
\bottomrule
\end{tabular}
\end{sc}
\end{small}
\end{center}\label{tab:result}
\vskip -0.1in
\end{table}

\section{Discussion}
\label{S:6}
In this study, we developed a non-parametric statistical model that can extrapolate individual signals in a multi-stream data setting. Using both synthetic and real-world data, we demonstrate our models ability to borrow strength across all streams of data, predict individual streams, account for heterogeneity and provide accurate real-time predictions where an empirical Bayesian approach updates our predictor as new data is observed in real-time. Since we work in the regime of streaming data, the frequency with which we receive data is very high. Due to this, our model needs to be efficient in terms of the time taken to make each update. With the assumption that all signals from $M$ streams have $Q$ observations, the complexity of multivariate FPCA for multi-stream data is $\mathcal{O}(M^2NQ^2 + M^3Q^3)$ \cite{fang2017multistream}. In our model, the computationally expensive steps are the FPCA for the target stream (Section \ref{S:4.1}) and the implementation of GP for estimating the FPC scores (Section \ref{S:4.2}). Following \citet{xiao2016fast}, the complexity of the former is $\mathcal{O}(QN^2+N^3)$. While complexity of a GP with an $N\times N$ covariance matrix is $\mathcal{O}(N^3)$ \cite{Rasmussen1st}. Given that we implement $K$ independent GP models the complexity of estimating the FPC scores is $\mathcal{O}(KN^3)$. Combing the above observations, we conclude that the complexity of our procedure is $\mathcal{O}(QN^2+N^3 + KN^3)$. Typically, we have that $M,N, K\ll Q$, also, in real-time $Q$ is increasing.  Thus, our model is clearly more efficient than multivariate the FPCA and applicable in practice in a real-time streaming environment.  

\section{Software and Data}
Technical proofs, the used dataset, a detailed code and additional numerical results are available in the \href{https://alkontar.engin.umich.edu/publications/}{supplementary materials}.

\bibliography{figs/example_paper}
\bibliographystyle{icml2019}

\end{document}